\documentclass{itatnew}

\usepackage{tikz}
\usepackage{amsmath,amssymb}
\usepackage{bm}
\usepackage{booktabs}

\usepackage{xcolor}

\let\vec\bm
\DeclareMathOperator{\sign}{sign}
\DeclareMathOperator{\E}{\mathbb{E}}

\makeatletter
\def\blfootnote{\xdef\@thefnmark{}\@footnotetext}
\makeatother


\begin{document}

\title{Learning to segment from object sizes}

\author{Denis Baručić \and Jan Kybic}

\institute{Department of Cybernetics, \\
Faculty of Electrical Engineering, \\
Czech Technical University in Prague}

\maketitle
\blfootnote{Copyright \copyright 2022 for this paper by its authors. Use permitted under Creative Commons License Attribution 4.0 International (CC BY 4.0).}

\begin{abstract}
Deep learning has proved particularly useful for semantic segmentation, a~fundamental image analysis task. However, the standard deep learning methods need many training images with ground-truth pixel-wise annotations, which are usually laborious to obtain and, in some cases (e.g., medical images), require domain expertise. Therefore, instead of pixel-wise annotations, we focus on image annotations that are significantly easier to acquire but still informative, namely the size of foreground objects. We define the object size as the maximum Chebyshev distance between a~foreground and the nearest background pixel. We propose an algorithm for training a~deep segmentation network from a~dataset of a~few pixel-wise annotated images and many images with known object sizes. The algorithm minimizes a~discrete (non-differentiable) loss function defined over the object sizes by sampling the gradient and then using the standard back-propagation algorithm. Experiments show that the new approach improves the segmentation performance.
\end{abstract}

\keywords{semantic segmentation, weakly-supervised learning, deep learning, distance transform}

\section{Introduction}

Semantic segmentation is the process of associating a class label to each pixel of an image. With the advent of deep learning, deep networks have achieved incredible performance on many image processing tasks, including semantic segmentation. Deep learning for semantic segmentation has many benefits; for example, it is flexible w.r.t. the model architecture and scales particularly well  \cite{liu2021,minaee2021}. On the contrary, the standard deep learning demands many \textit{ground-truth} (GT) pixel-wise annotations to prevent overfitting. Since a~human expert annotator must usually provide the GT annotations, acquiring a~good-quality training dataset can be difficult. To combat this issue, we focus on learning from GT image annotations that are easier to produce but still informative enough, namely the sizes of foreground objects. In practice, our approach assumes a~training dataset that consists of relatively few pixel-wise annotated images and many images with known object sizes. We present a~work-in-progress solution.

\subsection{Proposed approach}

Suppose a~standard convolutional network for image segmentation (e.g., a U-Net~\cite{ronneberger2015}). Given an input image, we feed it to the network and collect the output prediction. The prediction is then thresholded to obtain a~binary mask, which is processed by a~distance transform, assigning to each foreground pixel the shortest distance to the background. Finally, the object size is defined as double the maximum of the computed distances.

Due to the thresholding, the cost function is not differentiable and it is therefore not possible to use the standard gradient descent for learning. We overcome this obstacle by adding random noise to the output of our network. The predicted binary masks then become stochastic and the gradient can be sampled. A~detailed description of our method is given later in Sec.~\ref{sec:model} and \ref{sec:learning}.

\subsection{Related work}

Cano-Espinosa et al.~\cite{cano2020} considered a~similar learning problem. They proposed a~network architecture that performs a~biomarker (fat contents) regression and image segmentation after being trained directly on images annotated by biomarker values only. Similarly to ours, their method derives the biomarker value from the predicted segmentation deterministically. The difference is that their biomarker, equivalent to the foreground area, can be obtained by a~simple summation. Furthermore, the method assumes that the foreground objects can be roughly segmented using thresholding. P{\'e}rez-Pelegr{\'\i} et al.~\cite{perez2021} took a~similar approach. Although their method does not involve thresholding to produce approximate segmentation, it was tailored explicitly for learning from images annotated by the foreground volume (as their images are 3D).

Karam et al.~\cite{karam2019} implemented a~differentiable distance transform via a~combination of the convolution operations. The method is fast but exhibits numerical instabilities for bigger images. Resolving the numerical instabilities, Pham et al.~\cite{pham2020} later proposed a~cascaded procedure with locally restricted convolutional distance transforms. Nonetheless, both methods substitute the minimum function with the \textit{log-sum-exp} operation, which leads to inaccurate results.

The way our method deals with a~non-differentiable cost function is borrowed from stochastic binary networks~\cite{raiko2015}. In a~stochastic binary network, one needs to deal with zero gradient after each layer of the network. However, methods such as ARM~\cite{yin2019} or PSA~\cite{shekhovtsov2020} are unnecessarily complex. Instead, we employ a~single sample estimation, which has been discussed in \cite{yulai2019}.

\section{Model}
\label{sec:model}

The proposed model consists of (1) a~segmentation network, $f_{\vec{\theta}}$, parametrized by $\vec{\theta}$, and (2) a~deterministic algorithm to derive the object size based on distance transform, denoted as $g$.

Given an input image $\vec{x} = (x_1, \ldots, x_V)$, the network produces a~pixel-wise segmentation
\begin{equation}
\vec{a} = f_{\vec{\theta}}(\vec{x}),
\end{equation}
such that $a_i \in {\mathbb R}, \, 1 \leq i \leq V$, where $V$ is the number of pixels. The method does not make any assumptions about the network's technical details, except that it can be trained using the standard back-propagation algorithm and gradient descent. In our experiments, we always employed a~U-Net \cite{ronneberger2015} with a~residual network encoder \cite{he2016} and a~mirroring decoder.

To obtain a~binary mask $\vec{\hat{y}} \in \{ \pm 1 \}^V$, the network response $\vec{a}$ is thresholded,
\begin{equation}
\hat{y}_i = \sign a_i.
\end{equation}

\subsection{Object size}

We use a~distance transform of the binary mask to define the object size (see Fig.~\ref{fig:object-size}). Distance transform assigns to each pixel the shortest distance to the background, i.e.,
\begin{equation}
d_i = \min_{j, \hat{y}_j = -1} \delta( i, j ), \quad i = 1, \ldots, V,
\end{equation}
where $\delta(i, j)$ is the Chebyshev $\ell_{\infty}$ distance. After that, we take double the maximum distance to define the object size,
\begin{equation}
\hat{s} = 2 \, \max_i \, d_i.
\end{equation}
The composition of the distance transform and the maximum aggregation is the object size, denoted as $g \colon \{\pm 1 \}^V \to \mathbb{R}$,
\begin{equation}
g(\hat{\vec{y}}) = 2 \, \max_i \min_{j, \hat{y}_j = -1} \delta( i, j ).
\end{equation}

\begin{figure}[t]
\centering
\begin{tikzpicture}[scale=1.5]
\draw[ultra thick, fill=gray!40] plot [smooth cycle, tension=.8] coordinates {(0,0) (1,.2) (2,1) (2,1.5) (1,1.4) (0,1.44) (-.9, 1.1) (-.9, .3)};

\node [draw, circle, fill, inner sep=0pt, minimum size=3pt, label=west:$i$] (center) at (0.045, 0.72) {};

\draw[thick] (0,0) -- (center) node[midway, left] {$d_i$};
\draw[thick,densely dotted] (center) -- (0.09, 1.44);
\draw[<->] (0.144,-0.009) --  (0.09 + 0.144, 1.44 - 0.009) node[midway, right] {$\tilde{s}$};
\end{tikzpicture}
\caption{Illustrative example of an object and its derived size. The object is outlined by the thick boundary line. The point $i$ denotes the foreground pixel whose shortest distance to the background, $d_i$, is the highest among the pixels. The derived object size $\hat{s}=2 d_i$.}
\label{fig:object-size}
\end{figure}
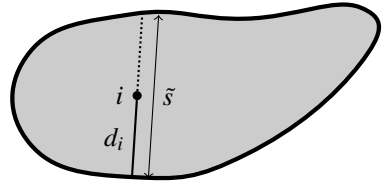

\subsubsection{Implementation details}

There is an efficient, \textit{two-pass} algorithm that computes the distance transform in $\Theta(V)$ time. Furthermore, when evaluating a~batch of images, it is possible to compute the distance transform on all images in parallel.

We have implemented a~CPU version\footnote{\url{https://github.com/barucden/chdt}} of this algorithm that works with PyTorch tensors and is faster than, e.g., the SciPy implementation.

\begin{figure*}[t]
\centering
\begin{tikzpicture}[
    minus/.style={path picture={
        \draw ([xshift=1.5] path picture bounding box.west) -- ([xshift=-1.5] path picture bounding box.east);
        }
    },
    stepfun/.style={inner sep=5pt, path picture={
        \draw (-0.15,-0.1) -- ( 0.0, -0.1) 
              ( 0.0, -0.1) -- ( 0.0,  0.1)
              ( 0.0,  0.1) -- (0.15,  0.1);
        }
    }
]

\node[circle, draw, fill, label=below:image $\vec{x}$, inner sep=0pt, minimum size=3pt] (input) at (0, 0) {};
\node[rectangle, draw, align=center] (net) at (2, 0) {Segmentation\\network $f_{\vec{\theta}}$};
\node[circle, minus, draw] (add) at (4, 0) {};
\node (noise) at (4, -1) {noise $Z$};
\node[rectangle, stepfun, draw] (threshold) at (5, 0) {};
\node[rectangle, draw, align=center] (distance) at (7, 0) {Distance\\transform};
\node[rectangle, draw] (maximum) at (9, 0) {Max};
\node[rectangle, draw] (loss) at (11.5, 0) {Loss $l$};
\node (label) at (11.5, -1) {$s$};
\node (output) at (13.5, 0) {$l(s, g(Y))$};

\draw[dotted] plot [smooth cycle, tension=0] coordinates {(6, -1.25) (9.65, -1.25) (9.65, .75) (6, .75)};
\node [anchor=south] at (7.825, -1.25) {Size derivation $g$};

\draw[->] (input) -- (net);
\draw[->] (net) -- (add);
\draw[->] (noise) -- (add);
\draw[->] (add) -- (threshold);
\draw[->] (threshold) -- (distance) node[midway, above] {$Y$};
\draw[->] (distance) -- (maximum);
\draw[->] (maximum) -- (loss) node[midway, above] {$g(Y)$};
\draw[->] (label) -- (loss);
\draw[->] (loss) -- (output);

\end{tikzpicture}%
\caption{An overview of the proposed probabilistic model.}
\label{fig:diagram}
\end{figure*}
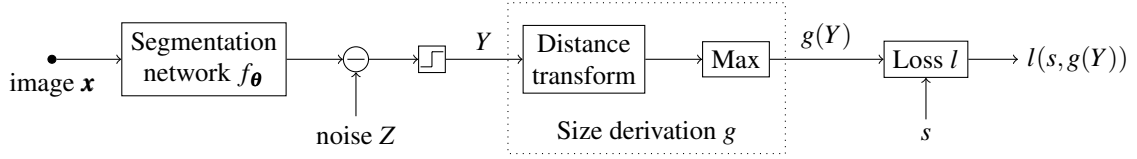

\section{Learning}
\label{sec:learning}

Suppose a~training dataset $\mathcal{D} = \mathcal{D}_f \cup \mathcal{D}_w$ consists of fully- and weakly-annotated subsets $\mathcal{D}_f$ and $\mathcal{D}_w$. The fully-annotated subset $\mathcal{D}_f$ contains pairs $(\vec{x}, \vec{y})$, where $\vec{x}$ is an input image and $\vec{y}$ the corresponding GT pixel-wise segmentation, while $\mathcal{D}_w$ comprises of pairs $(\vec{x}, s)$, where $s$ is the size of the object present in the image $\vec{x}$. We focus on situations when $\lvert \mathcal{D}_f \rvert \ll \lvert \mathcal{D}_w \rvert$.

\subsection{Supervised pre-training}

Our method starts by optimizing a~pixel-wise loss w.r.t. the network parameters $\vec{\theta}$ on the small subset $\mathcal{D}_f$, as in the standard supervised learning. For a~particular training pair $(\vec{x}, \vec{y}) \in \mathcal{D}_f$ and the corresponding prediction $\vec{a} \in \mathbb{R}^V$, the loss function reads
\begin{equation}
\sum_{i=1}^V \left( a_i(1 - y_i) + \log(1 + \exp(-a_i)) \right),
\end{equation}
which is sometimes referred to as the binary cross-entropy with logits loss. The optimization continues until convergence.

Using proper data augmentation to extend the training dataset, the network tends to recognize useful features and produces decent predictions after this initial stage (see Sec.~\ref{sec:pretraining-impact}).

\subsection{Weakly-supervised training}

Consider a~training pair $(\vec{x}, s) \in \mathcal{D}_w$. As described  in Sec.~\ref{sec:model}, one can obtain a~prediction of the object size, $\hat{s} = g(\hat{\vec{y}})$, from the thresholded network response $\hat{\vec{y}}$. We penalize the prediction error by the square loss
\begin{equation}
\label{eq:loss}
l(s, \hat{s}) = (s - \hat{s})^2.
\end{equation}

We propose to follow an approach similar to those used in binary neural networks \cite{shekhovtsov2020} and subtract random noise $Z$ from the real predictions $a_i$ before thresholding. Consequently, the binary segmentation becomes a~collection $\vec{Y} = (Y_1, \ldots, Y_V)$ of $V$ independent Bernoulli variables,
\begin{equation}
Y_i = \sign(a_i - Z),
\end{equation}
with
\begin{equation}
\label{eq:bernoulli-prob}
\Pr(Y_i = +1 \mid \vec{x}; \vec{\theta}) = \Pr(Z \leq a_i) = F_Z(a_i),
\end{equation}
where $F_Z$ is the cumulative distribution function (CDF) of the noise $Z$ (see Fig.~\ref{fig:diagram}).

Then, instead of minimizing the loss $l$ \eqref{eq:loss}, we minimize the expected loss $\mathcal{L}=\E_{\vec{Y}}[l(s, g(\vec{Y}))]$,
\begin{equation}
\label{eq:expected-loss}
\mathcal{L} = \sum_{\vec{y} \in \{\pm 1\}^V} \Pr(\vec{Y} = \vec{y} \mid \vec{x}; \vec{\theta}) l(s, g(\vec{y})).
\end{equation}
Contrary to \eqref{eq:loss}, the expected loss \eqref{eq:expected-loss} is differentiable, assuming a~smooth $F_Z$.

\subsubsection{Noise distribution}

Following \cite{shekhovtsov2020}, we sample the noise $Z$ from the logistic distribution with mean $\mu = 0$ and scale $s=1$. Hence, the CDF of $Z$ is a~smooth, sigmoid function,
\begin{equation}
F_Z(a) = \frac{1}{1 + \exp(-a)}.
\end{equation}

\subsubsection{Exact gradient}

To compute the gradient $\nabla_{\vec{\theta}} \mathcal{L}$, we need to evaluate the derivative
\begin{equation}
\label{eq:derivative}
\frac{\partial \E_{\vec{Y}}[l(s, g(\vec{Y}))]}{\partial F_Z(a_i)}
\end{equation}
for each pixel $i=1,\ldots,V$. The gradient can be then computed automatically by the back-propagation algorithm. However, an exact computation of \eqref{eq:derivative} leads to
\begin{equation}
\sum_{\vec{y} \in \{\pm 1\}^V} \frac{\Pr(\vec{Y} = \vec{y} \mid \vec{x}; \vec{\theta})}{\Pr(Y_i = y_i \mid \vec{x}; \vec{\theta})} l(s, g(\vec{y})) y_i,
\end{equation}
which involves summing $2^V$ terms and is thus tractable only for very small images. Instead, we resort to a~single sample estimator.

\subsubsection{Single sample estimator}

The single sample estimator is based on Lemma~\ref{lemma:estimator}, which is, in fact, a~specific form of \cite[Lemma B.1]{shekhovtsov2020}.

\begin{lemma}
\label{lemma:estimator}
Let $\vec{Y}=(Y_1, \ldots, Y_V)$ be a~collection of $V$ independent $\{ \pm 1 \}$-valued Bernoulli variables with probabilities $\Pr(Y_i = +1) = p_i$. Let $h$ be a~function $h \colon \{\pm 1\}^V \to \mathbb{R}$. Let $\vec{y} = (y_1, \ldots, y_V)$ denote a~random sample of $\vec{Y}$ and $\vec{y}_{\downarrow i} = (y_1, \ldots, y_{i-1}, -y_i, y_{i+1}, \ldots, y_V)$. Then
\begin{equation}
\label{lemma:eq:estimate}
y_i \left(h(\vec{y}) - h(\vec{y}_{\downarrow i}) \right)
\end{equation}
is an unbiased estimate of $\frac{\partial}{\partial p_i} \E_{\vec{y} \sim \vec{Y}}[h(\vec{y})]$.
\end{lemma}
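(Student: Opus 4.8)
The plan is to prove the identity $\frac{\partial}{\partial p_i}\E_{\vec{y}\sim\vec{Y}}[h(\vec{y})] = \E_{\vec{y}\sim\vec{Y}}\bigl[y_i\bigl(h(\vec{y}) - h(\vec{y}_{\downarrow i})\bigr)\bigr]$ by a direct computation that isolates the dependence of the expectation on the single parameter $p_i$. First I would write the expectation explicitly as a sum over all $\vec{y}\in\{\pm1\}^V$, factoring the joint probability as $\Pr(\vec{Y}=\vec{y}) = \Pr(Y_i=y_i)\prod_{j\neq i}\Pr(Y_j=y_j)$, and note that only the factor $\Pr(Y_i=y_i)$ depends on $p_i$; concretely $\Pr(Y_i=+1)=p_i$ and $\Pr(Y_i=-1)=1-p_i$, so $\frac{\partial}{\partial p_i}\Pr(Y_i=y_i) = y_i$. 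Differentiating term by term then gives $\frac{\partial}{\partial p_i}\E[h(\vec{y})] = \sum_{\vec{y}} y_i\Bigl(\prod_{j\neq i}\Pr(Y_j=y_j)\Bigr) h(\vec{y})$.

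Next I would split this sum according to the value of $y_i$, pairing each configuration with the one obtained by flipping the $i$-th coordinate. For a fixed assignment $\vec{y}_{-i}$ of the other coordinates, the two terms are $+\prod_{j\neq i}\Pr(Y_j=y_j)\,h(\vec{y}_{-i},+1)$ and $-\prod_{j\neq i}\Pr(Y_j=y_j)\,h(\vec{y}_{-i},-1)$, whose sum equals $\prod_{j\neq i}\Pr(Y_j=y_j)\bigl(h(\vec{y}_{-i},+1) - h(\vec{y}_{-i},-1)\bigr)$. The goal is to recognize this as an expectation, which is the one slightly delicate bookkeeping step: I want to re-express $h(\vec{y}_{-i},+1)-h(\vec{y}_{-i},-1)$ uniformly as $y_i\bigl(h(\vec{y}) - h(\vec{y}_{\downarrow i})\bigr)$ regardless of the sign of $y_i$, since for $y_i=+1$ this reads $h(\vec{y})-h(\vec{y}_{\downarrow i})=h(\vec{y}_{-i},+1)-h(\vec{y}_{-i},-1)$ and for $y_i=-1$ it reads $-\bigl(h(\vec{y}_{-i},-1)-h(\vec{y}_{-i},+1)\bigr)$, the same quantity.

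Finally I would reinsert the factor $\Pr(Y_i=y_i)$ that was cancelled by the differentiation: summing $\sum_{\vec{y}} \Pr(\vec{Y}=\vec{y})\, y_i\bigl(h(\vec{y})-h(\vec{y}_{\downarrow i})\bigr)$ and collecting the two sign cases for $y_i$ with a fixed $\vec{y}_{-i}$ reproduces exactly $\sum_{\vec{y}_{-i}}\prod_{j\neq i}\Pr(Y_j=y_j)\bigl(h(\vec{y}_{-i},+1)-h(\vec{y}_{-i},-1)\bigr)$, which matches the derivative computed above. This establishes that $y_i\bigl(h(\vec{y})-h(\vec{y}_{\downarrow i})\bigr)$ is an unbiased estimator of $\frac{\partial}{\partial p_i}\E[h(\vec{y})]$. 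The main obstacle is purely organizational: keeping the pairing of flipped configurations consistent and verifying that the $\Pr(Y_i=y_i)$ weight reappears with the correct sign in both the $y_i=+1$ and $y_i=-1$ branches; there is no analytic difficulty, as everything reduces to finite sums and the trivial derivative of a linear function of $p_i$.
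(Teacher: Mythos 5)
Your proposal is correct and follows essentially the same route as the paper: differentiate the sum term by term using $\frac{\partial}{\partial p_i}\Pr(Y_i=y_i)=y_i$, collapse the sum over $y_i$ to the difference $h(\vec{y}_{i=+1})-h(\vec{y}_{i=-1})$, reinsert the factor $\Pr(Y_i=y_i)$ via $p_i+(1-p_i)=1$, and rewrite the difference as $y_i\left(h(\vec{y})-h(\vec{y}_{\downarrow i})\right)$. The only difference is presentational (you verify that both sides reduce to the same sum, while the paper transforms the derivative into the expectation directly), so there is nothing to correct.
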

\begin{proof}
We take the derivative of the expectation,
\begin{equation}
\label{lemma:eq:derivative}
\frac{\partial}{\partial p_i} \E_{\vec{y} \sim \vec{Y}}[h(\vec{y})] = \sum_{\vec{y}} \frac{\Pr(\vec{y})}{\Pr(y_i)} h(\vec{y}) y_i,
\end{equation}
and write out the sum over $y_i$,
\begin{equation}
\sum_{\vec{y}_{\neg i}}\sum_{y_i} \Pr(\vec{y}_{\neg i}) h(\vec{y}) y_i
=\sum_{\vec{y}_{\neg i}} \Pr(\vec{y}_{\neg i}) \sum_{y_i}  h(\vec{y}) y_i
\end{equation}
where $\vec{y}_{\neg i}$ denotes vector $\vec{y}$ with the $i$-th component omitted. Notice that the inner sum simplifies and no longer depends on $y_i$,
\begin{equation}
\sum_{\vec{y}_{\neg i}} \Pr(\vec{y}_{\neg i}) (h(\vec{y}_{i=+1}) - h(\vec{y}_{i=-1})),
\end{equation}
where $\vec{y}_{i=z}$ is the vector $\vec{y}$ with the $i$-th component set to $z$. Then, we multiply the inner subtraction by the constant factor $1=p_i + (1 - p_i) = \sum_{y_i} \Pr(y_i)$,
\begin{equation}
\sum_{\vec{y}_{\neg i}} \Pr(\vec{y}_{\neg i}) \sum_{y_i} \Pr(y_i) (h(\vec{y}_{i=+1}) - h(\vec{y}_{i=-1})),
\end{equation}
ultimately leading to the following expression for \eqref{lemma:eq:derivative}:
\begin{equation}
\sum_{\vec{y}} \Pr(\vec{y}) (h(\vec{y}_{i=+1}) - h(\vec{y}_{i=-1})),
\end{equation}
which can be written as
\begin{equation}
\sum_{\vec{y}}\Pr(\vec{y}) y_i \left[h(\vec{y}) - h(\vec{y}_{\downarrow i})\right].
\end{equation}
Thus, \eqref{lemma:eq:estimate} is a~single sample unbiased estimate of \eqref{lemma:eq:derivative}.
\end{proof}

\begin{figure}
\centering
\begin{minipage}[t]{.19\linewidth} \centering
\frame{\includegraphics[width=\linewidth]{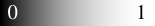}}

\vspace{1pt}

\frame{\includegraphics[width=\linewidth]{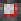}}

$F_Z$
\end{minipage}
\begin{minipage}[t]{.798\linewidth} \centering
\frame{\includegraphics[width=\linewidth]{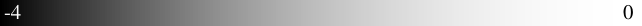}}

\vspace{1pt}

\begin{minipage}[t]{.24\linewidth} \centering
\frame{\includegraphics[width=\linewidth,page=1]{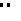}}

$n=1$
\end{minipage}
\begin{minipage}[t]{.24\linewidth} \centering
\frame{\includegraphics[width=\linewidth,page=2]{figs/grads.pdf}}

$n=8$
\end{minipage}
\begin{minipage}[t]{.24\linewidth} \centering
\frame{\includegraphics[width=\linewidth,page=3]{figs/grads.pdf}}

$n=64$
\end{minipage}
\begin{minipage}[t]{.24\linewidth} \centering
\frame{\includegraphics[width=\linewidth,page=4]{figs/grads.pdf}}

$n=512$
\end{minipage}
\end{minipage}

\caption{Examples of derivatives \eqref{eq:derivative} computed according to \eqref{eq:estimator} for different number of samples $n$, given the output of $F_Z$, for a~small, $6 \times 6$ image. The red frame outlines the object.}
\label{fig:gradient-examples}
\end{figure}

According to Lemma~\ref{lemma:estimator}, an unbiased estimate of the derivative \eqref{eq:derivative} is
\begin{equation}
\label{eq:estimator}
\frac{\partial \E_Y[l(s, g(Y))]}{\partial F_Z(a_i)} \approx y_i \left[l(s, g(\vec{y})) - l(s, g(\vec{y}_{\downarrow i}))\right],
\end{equation}
where $\vec{y}$ is a~random sample of Bernoulli variables with probabilities \eqref{eq:bernoulli-prob} (see a~few examples of sampled derivatives in Fig.~\ref{fig:gradient-examples}).

\section{Experiments}

The proposed method was implemented in the PyTorch Lightning framework\footnote{\url{https://github.com/Lightning-AI/lightning}} using a~ResNet implementation from the Segmentation Models PyTorch library\footnote{\url{https://github.com/qubvel/segmentation_models.pytorch}}. The presented experiments were perfomed on a~server equipped with Intel Xeon Silver 4214R (2.40GHz) and NVIDIA GeForce RTX 2080 Ti.

\begin{figure}[t]
\centering
\frame{\includegraphics[width=.7\linewidth]{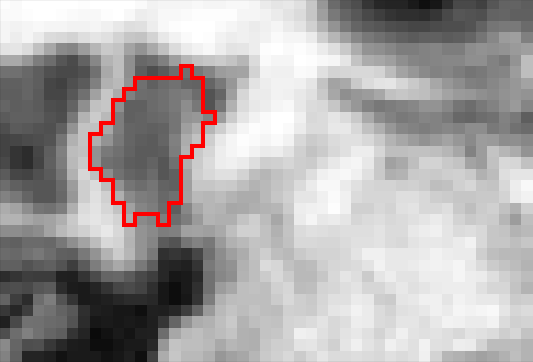}}
\caption{Example of a~hippocampus image~\cite{simpson2019} with the object outlined in red.}
\label{fig:hippocampus}
\end{figure}

The data for our experiments was based on a~dataset of 3D MRI images of the hippocampus \cite{simpson2019}. The dataset consists of 394 volumes provided with GT segmentation of classes \textit{hippocampus head}, \textit{hippocampus body}, and \textit{background}. We decomposed the volumes into individual 2D slices of size $48 \times 32$ pixels and kept only those with at least 1\% foreground, obtaining a~total of 6093 images. Next, we merged the \textit{hippocampus} classes to get a~binary segmentation problem (see Fig.~\ref{fig:hippocampus}). Afterward, we derived the object sizes from the GT pixel-wise annotations to use in training. Finally, we randomly split the data into training, validation, and testing subsets containing 70\%, 10\%, and 20\% of the images.

Given a~GT segmentation $\vec{y}$ and a~predicted segmentation $\hat{\vec{y}}$, we evaluate two metrics, the squared size prediction error $E$ and the intersection-over-union $IoU$,
\begin{align}
E(\vec{y}, \hat{\vec{y}}) &= l(g(\vec{y}), g(\hat{\vec{y}})), \\
IoU(\vec{y}, \hat{\vec{y}}) &= \frac{\sum_{i=1}^V 1 + y_i + \hat{y}_i + y_i \hat{y}_i}{\sum_{i=1}^V 3 + y_i + \hat{y}_i - y_i \hat{y}_i}.
\end{align}

In the case of standard supervised method, vertical and horizontal flipping was randomly applied to augment the training dataset. The proposed method did not apply any augmentation.

\subsection{Number of derivative samples}

\begin{figure}[t]
\includegraphics[width=\linewidth]{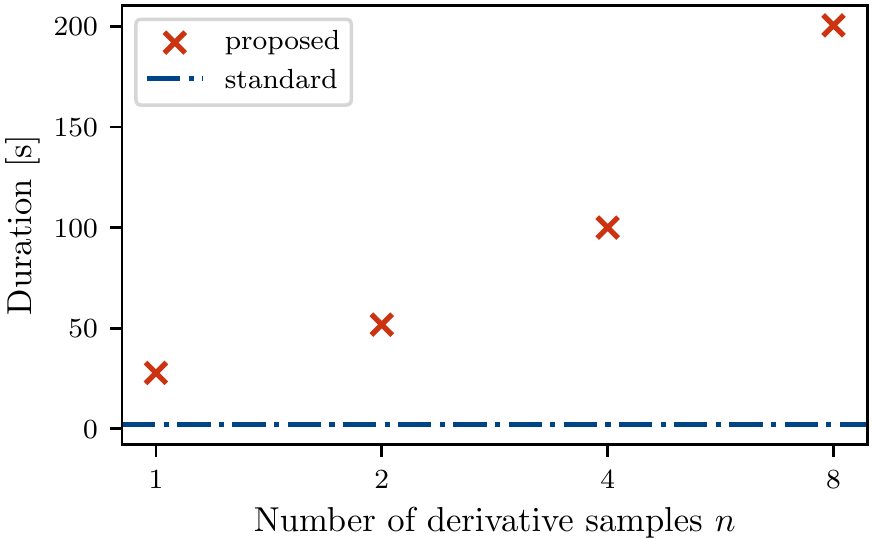}
\caption{Average epoch duration for the proposed method with different number of gradient samples. The duration of the standard method is given as a~reference.}
\label{fig:epoch-duration}
\end{figure}

\begin{figure*}
\centering
\includegraphics[width=.49\linewidth]{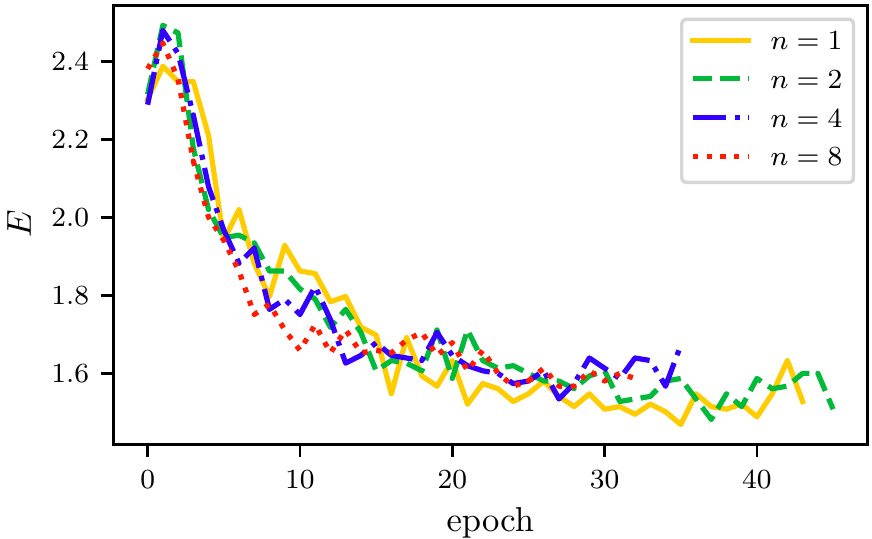}
\hfill
\includegraphics[width=.49\linewidth]{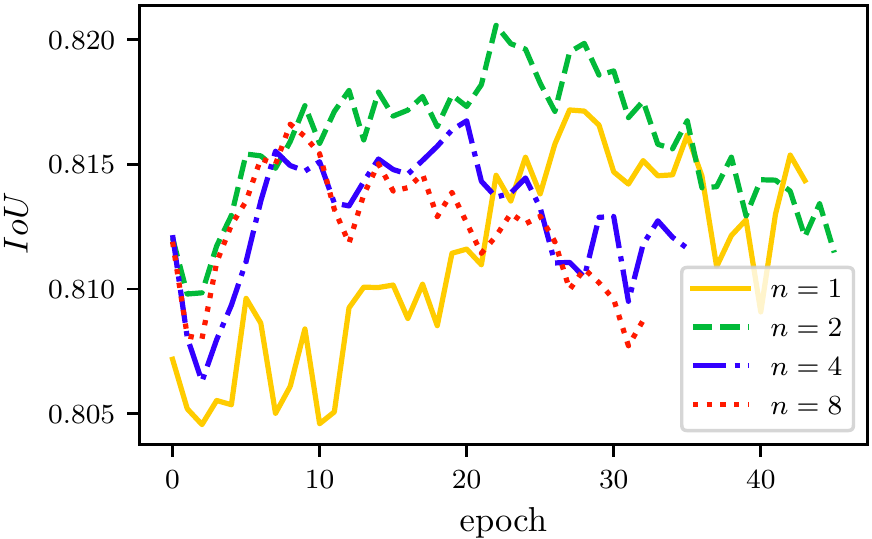}
\caption{Development of the squared size prediction error $E$ and the intersection-over-union $IoU$ on the validation images over the course of learning for different numbers of derivative samples $n$.}
\label{fig:error-and-iou}
\end{figure*}

A~toy example (see Fig.~\ref{fig:gradient-examples}) indicated that taking more samples of the derivatives \eqref{eq:estimator} might lead to better results than taking just one. This experiment investigates how the number of derivative samples $n$ impacts learning speed and prediction quality.

We considered four different numbers of samples $n$, $n \in \{1, 2, 4, 8\}$. For each $n$, the other parameters (such as the batch size or the learning rate) were the same, and the learning began with the same segmentation network $f_{\vec{\theta}}$ that was pre-trained in the standard way on $85$ pixel-wise annotated images from the training subset. The proposed method always ran until the squared error $E$ on the validation data stopped improving.

To assess the learning speed, we measured the duration of one learning epoch. For $n=1$, an epoch took $\approx 10\times$ longer than the standard supervised learning. Generally, the duration grew roughly exponentially with $n$ (see Fig.~\ref{fig:epoch-duration}).

Higher values of $n$ did not lead to a~lower $E$ or a~faster convergence speed (see Fig.~\ref{fig:error-and-iou}). In fact, $n=1$ and $n=2$ achieved the lowest $E$, but not by a~large margin. Given the speed benefits, we use $n=1$ always. Interestingly, even though $E$ kept decreasing over the course of learning for all $n$, $IoU$ improved only slightly and started declining after $\approx 20$ epochs. This observation suggests that the squared error of the object size is not a~sufficient objective for learning the segmentation.

\subsection{Pre-training impact}
\label{sec:pretraining-impact}

This experiment tests the essential question: given a~segmentation model trained on a~few pixel-level annotated images, can we improve its testing performance by further learning from size annotations?

We trained different segmentation networks until convergence on randomly selected training subsets of size $m$. Then, we fine-tuned these networks on the whole training dataset using the proposed method. We measured the test performance in terms of $IoU$.

The proposed method led to a~$\approx 5\%$ increase of $IoU$ for small $m < 100$ (see Fig.~\ref{fig:pretraining}), improving the segmentation quality.
For higher $m$, the effect was  negligible, which complements the observation from the previous experiment that improving the size estimate does not necessarily improve the segmentation quality.

\begin{figure}[t]
\includegraphics[width=\linewidth]{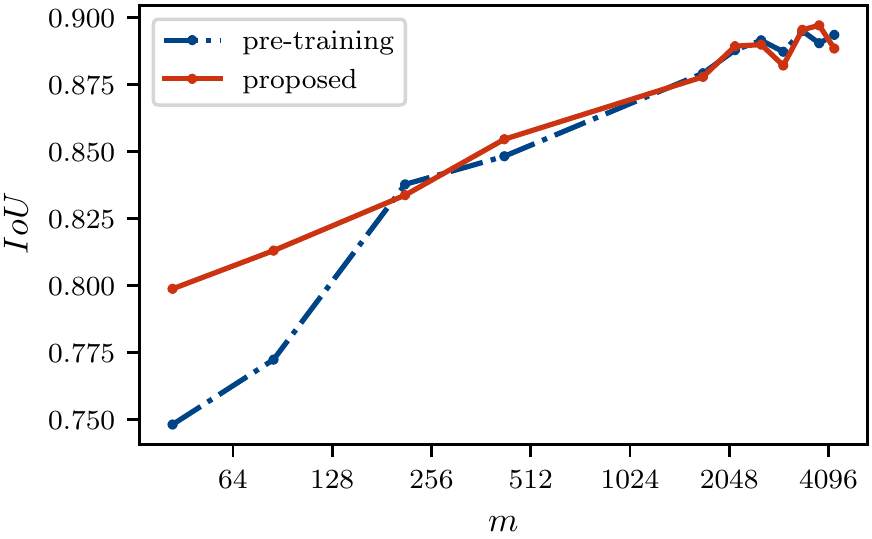}
\caption{$IoU$ on the test data for different sizes $m$ of the pre-training dataset. The plot shows results achieved by a~network after pre-training and after subsequent fine-tuning by the proposed method.}
\label{fig:pretraining}
\end{figure}

\section{Discussion}

The method is promising but there is definitely potential for improvement in both speed and prediction performance.

The proposed method samples the derivatives according to \eqref{eq:estimator} for each pixel $i$. Flipping the prediction, $y_i \mapsto -y_i$, changes the derived size only for some $i$; particularly those within and on the border of the predicted object. Therefore, given a~sample $\vec{y}$, $l(s, g(\vec{y})) = l(s, g(\vec{y}_{\downarrow i}))$ for many pixels $i$, and the sampled derivatives \eqref{eq:estimator} are sparse. The method might sample only those derivatives that are potentially non-zero and set the rest to zero directly, which would save much computational time.

We have seen in the experiments that lower size prediction error does not strictly imply better segmentation. We need to closely investigate in what cases the size prediction loss is insufficient and adjust the objective. The adjustment might involve adding an L1 regularization (as in \cite{cano2020}) or drawing inspiration from unsupervised methods (e.g., demand for the segmentation to respect edges in images, etc.).

The proposed approach entails some principled limitations. For example, it allows only a~single object in an image. We also expect the method to be ill-suited for complex object shapes, but we have not performed any experiments in that regard yet.

\section{Conclusion}

We proposed a~weakly-supervised method for training a~segmentation network from a~few pixel-wise annotated images and many images annotated by the object size.
The key ingredients is a~method for evaluating the object size from a probabilistic segmentation and a~method for optimizing a~deep network using a~non-differentiable objective.

The achieved results seem promising. We believe the improvements suggested in the discussion will improve performance, rendering the method valuable for training segmentation models for biomedical images.

\subsection*{Acknowledgments}

The authors acknowledge the support of the OP VVV funded project ``CZ.02.1.01/0.0/0.0/16\_019/0000765 Research Center for Informatics'', the Czech Science Foundation project 20-08452S, and the Grant Agency of the Czech Technical University in Prague, grant No. SGS20/170/OHK3/3T/13.

\bibliographystyle{hplain}
\bibliography{refs}

\end{document}